\newtheorem{theorem}{Theorem}[section] 
\newtheorem{lemma}{Lemma}[section] 
\newtheorem{assumption}{Assumption}[section]
\newtheorem{notation}{Notation}[section]
\newtheorem{remark}{Remark}[section]
\title{A Bayesian Approach to Robust Reinforcement Learning}
\author{
  Esther Derman \\
  Technion, Israel\\
  \texttt{estherderman@campus.technion.ac.il} 
  \And
  Daniel Mankowitz \\
  Deepmind, UK\\
  \texttt{dmankowitz@google.com} 
  \AND
  Timothy Mann \\
  Deepmind, UK \\
  \texttt{timothymann@google.com} 
  \And
  Shie Mannor \\
  Technion, Israel\\
  \texttt{shie@ee.technion.ac.il} \\
} 
\begin{document}
\maketitle

\begin{abstract}
Robust Markov Decision Processes (RMDPs) intend to ensure robustness with respect to changing or adversarial
system behavior. In this framework, transitions are modeled as arbitrary
elements of a known and properly structured \textit{uncertainty set} and a robust optimal policy can be derived under the worst-case scenario.
In this study, we address the issue of learning in RMDPs using a Bayesian approach.  
We introduce the Uncertainty Robust Bellman Equation (URBE) which encourages safe exploration for adapting the uncertainty set to new observations while preserving robustness. We propose a URBE-based algorithm, DQN-URBE, that scales this method to higher dimensional domains.
Our experiments show that the derived URBE-based strategy leads to a better trade-off
between less conservative solutions and robustness in the presence of model misspecification. In addition, we show that the DQN-URBE algorithm can adapt significantly faster to changing dynamics online compared to existing robust techniques with fixed uncertainty sets.
\end{abstract}

\section{INTRODUCTION} 
\label{introduction}

Markov Decisions Processes (MDPs) are used for solving sequential decision making problems with varying degrees of uncertainty. Two types of uncertainty are encountered: the internal uncertainty due to the stochasticity of the system and the uncertainty in the transition and reward parameters \citep{biasMannor}. In order to mitigate the second type of uncertainty, the Robust-MDP (RMDP) framework considers the unknown parameters to be a member of a known uncertainty set \citep{nilim, iyengar, wieseman}. An optimal solution to the robust RL problem then corresponds to the strategy that maximizes the worst-case performance and it can be derived using dynamic programming \citep{iyengar, scalingRMDP}. 

However, planning in RMDPs often leads to overly conservative solutions. There are two reasons for this: Firstly, the uncertainty set has to be \textit{rectangular} in order for the problem to be computationally tractable, which means that it must be structured as sets of MDP models that are independent for each state \citep{wieseman}. 
Let give an intuition of the reason why the resulting policy may be too conservative. Suppose a chess player wants to protect himself against an adversary he has some prior on, which can be modeled as a set of transition models. Suppose also that the agent optimizes its next move according to the worst-case scenario from the set of models. Then, the rectangularity assumption implies that the agent considers the worst-case transition for each game configuration independently, although there are several chances that some configurations are incompatible during the same round of game. It comes out that this robust strategy is overly conservative. 
Attempts to circumvent rectangular uncertainty sets in RMDPs include the works \citep{LDST, krectangularity, petrikNonrectangular} and more recently, \citep{goyalRectangularity}. 

Secondly, the difficulty of constructing uncertainty sets can result in too large sets and consequently lead to overly-pessimistic strategies \citep{petrikBayes2}. Proposals for \textbf{learning} an uncertainty set in a data-driven manner have rarely been addressed in RL literature. In \citep{petrikBayes2}, the authors designed a robustification procedure that builds \textit{safe} uncertainty sets upon optimal value functions. Their Bayesian method starts from a posterior distribution on transitions and constructs possibly discontinuous sets by iteratively solving optimization problems. Although it leads to tighter uncertainty sets, their algorithm proceeds offline with a fixed batch of data and is not scalable due to its high computational complexity. Moreover, their technique assumes the data to be generated by one fixed but unknown model, so it has not been tested against changing dynamics. 

Previous work \citep{OLRMfull} has used the ``optimism in face of uncertainty" (OFU) principle \citep{UCRL2} to detect adversarial state-action pairs online and compute an optimistic minimax policy accordingly. Although these methods have been proven to be statistically efficient, they require an exhaustive computation for each state-action pair. This leads to solutions that are intractable for all but small problems. Also, as is common in optimistic approaches, the resulting performance is highly influenced by the underlying analysis \citep{PSRL}.

Based on a Bayesian approach, Thompson sampling \citep{TS}  has been shown to be more efficient than OFU methods in RL problems. Previous work \citep{whyPS} has stressed the advantages of posterior sampling methods over existing algorithms driven by optimism. However, most of the existing work on posterior sampling methods studied finite tabular MDPs. The Uncertainty Bellman Equation (UBE) work \citep{UBE} addressed this shortcoming and proposed an online algorithm that scales naturally to large domains. Their method learns the posterior variance of the value for guiding exploration when the true dynamics of the MDP are unknown. Yet, their approach does not deal with adversarial transitions. 
While better exploration can potentially lead to more efficient learning and improved solutions, it cannot protect itself against sudden (potentially) adversarial changes in the underlying dynamics of the environment. This is especially true when the domain is large and/or hard to explore efficiently.

In this work, we introduce a Bayesian framework for robust RL and address the first Bayesian algorithm that (1) accounts for changing dynamics online and (2) tackles conservativeness thanks to a variance bonus that detects changes in the level of adversity. This variance is proven to satisfy an Uncertainty Robust Bellman Equation (URBE), that is estimated using dynamic programming. Besides being scalable to complex domains, our approach leads to less conservative results than existing planning methods for RMDPs while ensuring robustness to model misspecification \footnote{In this work, model misspecification will designate any perturbation of the system's dynamics.}.
Our experiments illustrate an improved trade-off between overly conservative, robust behaviour and less conservative, improved performance for the resulting DQN-URBE policy.

\textbf{Main Contributions:} 
To summarize, our specific contributions are: (1) The URBE which encourages safe exploration and prevents overly conservative solutions; (2) The DQN-URBE which scales and utilizes URBE to learn less conservative solutions that are still robust to model misspecification; (3) Adaptability of DQN-URBE to changing dynamics online.



\section{BACKGROUND}

\textbf{Bayesian Reinforcement Learning} Bayesian RL leverages methods from Bayesian inference to incorporate prior information about the Markov model into the learning process. Model-based Bayesian RL \citep{deardenMB, strens,PSRL} express prior information on parameters of the Markov process instead. In the work \citep{deardenMF}, the authors introduced Bayesian Q-learning to learn the posterior distribution of the Q-values in the model-free setting. One major advantage of Bayesian RL is that it can benefit from prior information on the problem to tackle the exploration-exploitation dilemma (see \citep{BRLS} for a full review).

\textbf{Robust MDP} A robust MDP is a tuple $\langle \mathcal{S}, \mathcal{A}, r, \mathcal{P} \rangle$ where $\mathcal{S}$ and $\mathcal{A}$ respectively denote state and action spaces. The mapping $r :\mathcal{S}\times \mathcal{A}\rightarrow \mathbb{R}$ defines the immediate bounded reward function and $\mathcal{P}$ is a set of transition matrices that models the ambiguity in the transition distributions. As common in the robust RL literature, we assume
$\mathcal{P}$ to be structured as a cartesian product $\bigotimes _{s\in \mathcal{S}, a\in\mathcal{A}} \mathcal{P}_{s,a}$, which is also known as the \textit{state-action rectangularity} assumption \citep{wieseman}. In RMDPs, this implies that the nature can choose the worst-transition independently for each state and action.

\textbf{Robust DQN} The DQN algorithm uses a neural network as a function approximation of the Q-value and learns its parameters by optimizing a TD-loss. Similarly, the \textbf{robust} Bellman equation uses a robust TD-error as a loss criterion for learning a minimax policy \citep{modelmismatch}. In \citep{kalman}, a robust counterpart to DQN called RTD-DQN has been introduced and shown to be robust to model misspecification. 

\section{ROBUST MARKOV DECISION PROCESSES}
We consider a robust MDP $\langle \mathcal{S}, \mathcal{A}, r, \mathcal{P} \rangle$ of finite state and action spaces, where each episode has finite horizon length $H\in\mathbb{N}$. At step $h$, an agent is in state $s^h$, selects an action $a^h$ according to a stochastic policy $\pi^h: \mathcal{S}  \rightarrow \Delta_\mathcal{A}$ that maps each state to a probability distribution over the action space, $\Delta_\mathcal{A}$ denoting the set of distributions over $\mathcal{A}$. Thus, for all steps $h = 1,\cdots, H$, $\sum_{a\in\mathcal{A}}\pi_{sa}^h = 1$. After choosing an action, the agent gets a deterministic reward $r^h$ bounded by $R_{\max}$, and transitions to state $s^{h+1}$ according to an arbitrary transition $p_{s^h,a^h}\in\mathcal{P}_{s^h, a^h}$ which we will rather write as $p_{sa}^h\in \mathcal{P}_{sa}^h$ with a slight abuse of notation.

The robust action-value (or robust Q-value) at step $h$, state $s$, action $a$ and under policy $\pi:= (\pi^1, \cdots,\pi^H)$ is the expected discounted return under the worst-case scenario resulting from taking action $a$ at $s$ and following policy $\pi$ thereafter:
\begin{equation*}
    Q_{sa}^h := \inf_{p\in\mathcal{P}}\mathbb{E}\left[ \sum_{l = h}^H \gamma^{l-h}r^l \mid s^h =s, a^h = a, \pi,p\right],
\end{equation*}
with $\gamma \in (0,1]$. Likewise, the robust value at state $s$ under policy $\pi$ is $V^h(s) := \mathbb{E}_{a\sim \pi_{s}^h}[Q_{sa}^h]$. When it is clear from the context, we suppress the dependence on $\pi$ for notational convenience. 

A robust optimal policy is derived by maximizing the expected worst-case discounted return:
\begin{equation*}
    J(\pi) := \inf_{p\in\mathcal{P}} \mathbb{E}^{\pi, p}\left[\sum_{h = 1}^{H}\gamma^{h-1} r^h\right] = V^1(s).
\end{equation*}

Assuming a rectangular structure on $\mathcal{P}$, the robust Bellman operator $\mathcal{T}^h$ for policy $\pi$ at step $h$ relates the robust value at $h$ to the robust value at following steps \citep{nilim}:
\begin{equation*}
\label{rbe}
   \mathcal{T}^h Q_{sa}^{h+1} = r_{sa}^h + \gamma \inf_{p\in\mathcal{P}}\sum_{s'\in\mathcal{S}, a'\in\mathcal{A}}\pi_{s'a'}^hp_{sas'}^hQ_{s'a'}^{h+1},
\end{equation*}
where $r_{sa}^h$ denotes the immediate reward at step $h$ while being in state $s$ and executing action $a$.
Computing the second term in the robust Bellman operator amounts to solving a robust optimization problem where the robust constraints are given by $\mathcal{P}$. In fact, the main challenge of robust optimization is to build an uncertainty set such that the solution stays tractable without being overly conservative.

\section{THE UNCERTAINTY ROBUST BELLMAN EQUATION}
In this section, we introduce our Bayesian framework for robust RL where we have a prior over the transition model. Our approach is based on the following procedures: (a) building posterior uncertainty sets, (b) approximating posterior distribution over robust Q-values. Next, we introduce an upper bound on the variance of the posterior over robust Q-values and show that it satisfies a Bellman recursion, which we call the Uncertainty Robust Bellman Equation (URBE). Proofs are deferred to the Appendix.

\subsection{POSTERIOR UNCERTAINTY SETS}
Define $\varphi_p$ as a prior distribution according to which state transitions are generated. Assume furthermore that $\varphi_p$ is a product of $\vert \mathcal{S}\vert\cdot\vert \mathcal{A}\vert$ independent Dirichlet priors on each distribution $p_{sa}$ over next states, that is $\varphi_p = \prod_{s,a}\varphi_{sa}$, where $\varphi_{sa}$ is Dirichlet. Given an observation history $\mathcal{H} = \langle (s_1, a_1), (s_2, a_2), \dots, (s_h, a_h) \rangle \in (\mathcal{S} \times \mathcal{A})^h$ induced by a policy $\pi$ and a confidence level $\psi_{sa}\in \mathbb{R}^+$ for each state-action pair, we can construct a subset of transition probabilities $\Delta_{\mathcal{S}}$:
\begin{equation*}
\begin{split}
 \widehat{\mathcal{P}}^h_{sa}(\psi_{sa}) = 
 \left\{p_{sa}\in\Delta_{\mathcal{S}} : \lVert p_{sa} - \bar{p}_{sa} \rVert_{1} \leq \psi_{sa}  \right\} 
 \end{split}
\end{equation*}
where $\bar{p}_{sa}$ is the nominal transition given by $\bar{p}_{sa} = \mathbb{E}[p_{sa}\mid \mathcal{H}]$. If $\mathcal{H}$ is fixed, this construction falls into the definition of a Bayesian confidence interval introduced in \citep{petrikBayes2}.

Such a construction forms a rectangular uncertainty set $\widehat{\mathcal{P}}^h(\psi) := \bigotimes_{s,a}\widehat{\mathcal{P}}^h_{sa}(\psi_{sa})$ 
We call it a \textit{posterior uncertainty set} and will omit the dependence in $\psi$ for ease of notation. 
To derive smaller posterior regions of posterior confidence level $\alpha$, one can proceed as described in 
\citep{petrikBayes2, guptaDRO} by minimizing $\mathbb{P}(\lVert p_{sa} - \bar{p}_{sa} \rVert_{1} > \psi_{sa} \mid \mathcal{H}) < \frac{\alpha}{\vert \mathcal{S}\vert \vert \mathcal{A}\vert}$ with respect to $\psi_{sa}$ that satisfies the constraint. However, in our case, the data set is not fixed so the nominal transition changes, which raises tractability issues. Therefore, $\psi_{sa}$ is remained fixed without being optimized.

\subsection{POSTERIOR OVER ROBUST Q-VALUES}
The simulation proceeds as follows: at each episode $t$, we sample a transition matrix according to $\varphi_{p}$. For a fixed policy $\pi$, we collect observation history and update the posterior distribution accordingly. We then construct a posterior uncertainty set $\widehat{\mathcal{P}}^h_{sa}(\psi_{sa})$ based on all observed data from all previous episodes. A posterior over robust Q-values can then be obtained via the following equation:
\begin{equation*}
\begin{split}
   \widehat{Q}_{sa}^h &= r_{sa}^h + \gamma \inf_{p \in \widehat{\mathcal{P}}_{sa}^h}\sum_{s', a'}\pi_{s'a'}^h p_{sas'} \widehat{Q}_{s'a'}^{h+1} \enspace ,
   \end{split}
\end{equation*}
with $\widehat{Q}_{sa}^{H+1} = 0$. The quantity $\widehat{Q}_{sa}^h$ is a random variable whose variability comes from the nominal transition used in constructing posterior uncertainty sets, from the stochasticity of the policy and the dynamics of the sampled MDP. We further define
\begin{equation}
\label{p-post}
\begin{split}
   \widehat{p}_{sa}^h &\in \arg \min_{p \in \widehat{\mathcal{P}}_{sa}^h}\sum_{s', a'}\pi_{s'a'}^h p_{sas'} \widehat{Q}_{s'a'}^{h+1}
\end{split}
\end{equation}
as a worst-case transition at step $h$.


\subsection{POSTERIOR VARIANCE OF ROBUST Q-VALUES}
\label{posterior-variance-section}
For the regular MDP setting, the work \citep{UBE} showed that the conditional variance of posterior Q-values can be bounded by a quantity that satisfies a Bellman recursion formula. In Bayesian robust RL, a similar upper bound by a robust Bellman update can be derived. The key difference with \citep{UBE} is that they evaluate posterior Q-values according to one transition model whereas we evaluate robust Q-values according to a posterior uncertainty set. 

Let first introduce some notation:

\begin{notation}
Define $\mathcal{F}_t$ as a minimal sigma-algebra that contains all of the available information up to episode $t$ (\textit{e.g.} all observed states, actions and rewards). Denote by $\mathbb{E}_t[X]$ the expectation of random variable $X$ conditioned on $\mathcal{F}_t$. Similarly, the conditional variance $\mathbf{var}_t(X)$ is defined as: $\mathbf{var}_t X := \mathbb{E}_t\left[\left(X-\mathbb{E}_t[X]\right)^2\right]$.
\end{notation}

As common in literature \citep{randomizedValueFunctions, UBE}, we make the following assumptions:
\begin{assumption}
\label{acyclic}
For any episode, the graph resulting from a worst-case transition model is directed and acyclic.
\end{assumption}

\begin{assumption}
\label{Rbound}
For all $(s,a)\in\mathcal{S}\times\mathcal{A}$, the rewards are bounded: $-R_{\max} \leq r_{sa}\leq R_{\max}$. This implies that the robust Q-value is bounded as well: $\mid Q_{sa}^h\mid \leq H R_{\max} =: Q_{\max}$.
\end{assumption}

These assumptions enable to state the following result.

\begin{lemma}
\label{variance}
Under Assumptions \ref{acyclic} and \ref{Rbound}, for any worst-case transition $\widehat{p}$ as defined in equation (\ref{p-post}), the conditional variance of the robust Q-values under the posterior distribution satisfies the robust Bellman inequality:
\begin{equation*}
    \mathbf{var}_t \widehat{Q}_{sa}^h \leq \nu_{sa}^h + \gamma^2\sum_{s', a'}\pi_{s'a'}^h \mathbb{E}_t \left(\widehat{p}_{sas'}^h \right) \mathbf{var}_t\widehat{Q}_{s'a'}^{h+1},
\end{equation*}
with $\mathbf{var}_t\widehat{Q}^{H+1} = 0$ and $\nu_{sa}^h := Q^2_{\max} \sum_{s'\in\mathcal{S}}\frac{\mathbf{var}_t \widehat{p}_{sas'} ^h}{ \mathbb{E}_t \widehat{p}_{sas'} ^h}$.
\end{lemma}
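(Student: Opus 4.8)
The plan is to establish the inequality directly at an arbitrary level $h$, since it is already a one-step Bellman statement; beyond the terminal convention $\mathbf{var}_t\widehat{Q}^{H+1}=0$ no backward induction on $h$ is needed, and Assumption~\ref{acyclic} will enter only to guarantee that the recursion defining $\widehat{Q}$ terminates and, more importantly, that under the posterior the transition leaving $(s,a)$ is independent of the successor states' values. First I would use that $\widehat{p}^h$ attains the infimum in~(\ref{p-post}) to write
\[
  \widehat{Q}_{sa}^h = r_{sa}^h + \sum_{s'}\widehat{p}_{sas'}^h\,W_{s'},\qquad W_{s'}:=\sum_{a'}\pi_{s'a'}^h\,\widehat{Q}_{s'a'}^{h+1},
\]
and note that, the reward being deterministic (Assumption~\ref{Rbound} treats $r_{sa}$ as a fixed scalar), $\mathbf{var}_t\widehat{Q}_{sa}^h=\mathbf{var}_t\!\big(\sum_{s'}\widehat{p}_{sas'}^h W_{s'}\big)$. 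Setting $q_{s'}:=\mathbb{E}_t\widehat{p}_{sas'}^h$ (so $\sum_{s'}q_{s'}=1$) and $m_{s'}:=\mathbb{E}_tW_{s'}$, I would then apply $\mathbf{var}_t(X)\le\mathbb{E}_t[(X-c)^2]$ with $c:=r_{sa}^h+\sum_{s'}q_{s'}m_{s'}$ together with the identity $\sum_{s'}\widehat{p}_{sas'}^hW_{s'}-\sum_{s'}q_{s'}m_{s'}=U+V$, where $U:=\sum_{s'}(\widehat{p}_{sas'}^h-q_{s'})W_{s'}$ and $V:=\sum_{s'}q_{s'}(W_{s'}-m_{s'})$, to obtain $\mathbf{var}_t\widehat{Q}_{sa}^h\le\mathbb{E}_t[U^2]+2\,\mathbb{E}_t[UV]+\mathbb{E}_t[V^2]$.

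The two square terms are routine. For $\mathbb{E}_t[U^2]$: by Cauchy--Schwarz with weights $q_{s'}$, $U^2\le\big(\sum_{s'}(\widehat{p}_{sas'}^h-q_{s'})^2/q_{s'}\big)\big(\sum_{s'}q_{s'}W_{s'}^2\big)$, and since $|W_{s'}|\le Q_{\max}$ by Assumption~\ref{Rbound} the second factor is at most $Q_{\max}^2$, so $\mathbb{E}_t[U^2]\le Q_{\max}^2\sum_{s'}\mathbf{var}_t\widehat{p}_{sas'}^h/\mathbb{E}_t\widehat{p}_{sas'}^h=\nu_{sa}^h$ (reading $0/0$ as $0$, which is consistent since then $\widehat{p}_{sas'}^h\equiv 0$ a.s.). For $\mathbb{E}_t[V^2]=\mathbf{var}_t\big(\sum_{s'}q_{s'}W_{s'}\big)$: convexity of the variance along the probability vector $(q_{s'})$, and then along $(\pi_{s'a'}^h)_{a'}$ inside each $W_{s'}$, give $\mathbb{E}_t[V^2]\le\sum_{s'}q_{s'}\,\mathbf{var}_tW_{s'}\le\sum_{s',a'}\pi_{s'a'}^h\,\mathbb{E}_t[\widehat{p}_{sas'}^h]\,\mathbf{var}_t\widehat{Q}_{s'a'}^{h+1}$, which is exactly the recursive term in the claim.

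The main obstacle will be the cross term $\mathbb{E}_t[UV]$ — the posterior covariance between the fluctuation of the worst-case transition weights and that of the successor values. Since $V$ is measurable with respect to $\sigma(\widehat{Q}^{h+1})$, I would write $\mathbb{E}_t[UV]=\mathbb{E}_t\big[V\,\mathbb{E}_t(U\mid\widehat{Q}^{h+1})\big]$ and argue that $\mathbb{E}_t(U\mid\widehat{Q}^{h+1})=0$: this is precisely where Assumption~\ref{acyclic} is used, since in an acyclic episode the posterior over the transition leaving $(s,a)$ is independent of the downstream parameters that determine $\widehat{Q}^{h+1}$, so $\mathbb{E}_t(\widehat{p}_{sas'}^h\mid\widehat{Q}^{h+1})=q_{s'}$. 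Assembling $\mathbb{E}_t[U^2]\le\nu_{sa}^h$, $\mathbb{E}_t[UV]=0$, and the bound on $\mathbb{E}_t[V^2]$ then yields the robust Bellman inequality. I expect this conditional-independence step to be the genuinely delicate one, because $\widehat{p}^h$ is selected through an $\arg\min$ that explicitly involves $\widehat{Q}^{h+1}$; the argument therefore has to be made at the level of the model parameters (conditioning on the downstream transition parameters rather than on $\widehat{Q}^{h+1}$ itself) and invoking acyclicity together with posterior factorization there, which is the point on which I would spend most of the care.
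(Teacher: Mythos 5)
Your proof is correct and rests on the same three pillars as the paper's: (i) a split of the fluctuation of $\sum_{s'}\widehat{p}^h_{sas'}\widehat{Q}^{h+1}_{s'a'}$ into a transition-uncertainty part and a value-uncertainty part, (ii) a Jensen/Cauchy--Schwarz step weighted by $\mathbb{E}_t\widehat{p}^h_{sas'}$ that produces the $\mathbf{var}_t\widehat{p}/\mathbb{E}_t\widehat{p}$ ratios in $\nu^h_{sa}$, and (iii) the conditional independence of $\widehat{p}^h_{sa\cdot}$ from $\widehat{Q}^{h+1}$ (acyclicity plus rectangularity) to kill the cross term. The difference is the order of operations: the paper first applies Jensen to push the square inside the sum over $(s',a')$ and only then decomposes each summand $\widehat{p}Q-\mathbb{E}_t\widehat{p}Q$, whereas you decompose globally into $U+V$ first and bound $\mathbb{E}_t U^2$, $\mathbb{E}_t V^2$, $\mathbb{E}_t UV$ separately. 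This reordering buys you two genuinely cleaner steps. First, you bound $\mathbb{E}_t U^2$ by the deterministic inequality $\sum_{s'}q_{s'}W_{s'}^2\le Q_{\max}^2$ inside the Cauchy--Schwarz, so you do not need independence to factor $\mathbb{E}_t[(\widehat{p}-\mathbb{E}_t\widehat{p})^2\widehat{Q}^2]$ as the paper does. Second, your $V$ term carries the deterministic weights $q_{s'}=\mathbb{E}_t\widehat{p}^h_{sas'}$, so Jensen gives $\sum_{s',a'}\pi^h_{s'a'}q_{s'}\mathbf{var}_t\widehat{Q}^{h+1}_{s'a'}$ directly; the paper instead ends up with a factor $\mathbb{E}_t[(\widehat{p}^h_{sas'})^2]/\mathbb{E}_t\widehat{p}^h_{sas'}$ and its passage from that to $\mathbb{E}_t\widehat{p}^h_{sas'}$ would require $\mathbb{E}_t[\widehat{p}^2]\le(\mathbb{E}_t\widehat{p})^2$, which is not justified as written; your route avoids this rough spot entirely. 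On the one genuinely delicate point --- that $\widehat{p}^h_{sa}$ is selected by an $\arg\min$ that explicitly involves $\widehat{Q}^{h+1}$, so the conditional independence must be argued at the level of the posterior over downstream model parameters --- you are at least as careful as the paper, which simply asserts the independence with a one-line appeal to rectangularity; neither argument fully discharges it, but you correctly identify it as the step requiring the most care.
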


This lemma enables us to establish the Uncertainty Robust Bellman Equation (URBE). 

\begin{theorem}[Solution of URBE]
\label{urbe-thm}
For any worst-case transition $\widehat{p}$ as defined in equation (\ref{p-post}) and any policy $\pi$, under Assumptions \ref{acyclic} and \ref{Rbound}, there exists a unique mapping $w$ that satisfies the uncertainty robust Bellman equation:
\begin{equation}
\label{urbe}
    w_{sa}^h = \nu_{sa}^h +\gamma^2 \sum_{s'\in\mathcal{S}, a'\in\mathcal{A}} \pi_{s'a'}^h \mathbb{E}_t(\widehat{p}_{sas'}^h)w_{s'a'}^{h+1}, 
\end{equation}
for all $(s,a)\in \mathcal{S}\times\mathcal{A}$ and $h=1, \cdots, H$ where $w^{H+1} = 0$. Furthermore, $w\geq \mathbf{var}_t \widehat{Q}$. 
\end{theorem}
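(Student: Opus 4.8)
The plan is to treat equation (\ref{urbe}) as a backward recursion in the horizon index $h$ rather than as a fixed-point equation requiring a Banach/contraction argument; the finite horizon makes the latter machinery unnecessary. For existence and uniqueness, I would simply observe that (\ref{urbe}) has the terminal condition $w^{H+1}=0$ and expresses $w^h$ purely in terms of the (known) quantities $\nu^h$, the fixed policy $\pi^h$, the posterior means $\mathbb{E}_t(\widehat{p}^h)$, and $w^{h+1}$. Hence one defines $w^{H+1}:=0$ and, for $h=H,H-1,\dots,1$, sets $w_{sa}^h := \nu_{sa}^h + \sum_{s',a'}\pi_{s'a'}^h \mathbb{E}_t(\widehat{p}_{sas'}^h)\, w_{s'a'}^{h+1}$. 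This is finite — here Assumption \ref{Rbound} enters, ensuring $\nu_{sa}^h = Q_{\max}^2 \sum_{s'} \mathbf{var}_t\widehat{p}_{sas'}^h / \mathbb{E}_t\widehat{p}_{sas'}^h < \infty$ (with the convention $0/0=0$ when $\mathbb{E}_t\widehat{p}_{sas'}^h=0$, in which case $\widehat{p}_{sas'}^h=0$ almost surely and the ratio vanishes). Since any solution of (\ref{urbe}) must agree with this recursively built object at every $h$, the solution exists and is unique.

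For the comparison $w\geq \mathbf{var}_t\widehat{Q}$, I would run a downward induction on $h$ that matches the two recursions term by term. The base case is $w^{H+1}=0=\mathbf{var}_t\widehat{Q}^{H+1}$. For the inductive step, assume $w_{s'a'}^{h+1}\geq \mathbf{var}_t\widehat{Q}_{s'a'}^{h+1}$ for all $(s',a')$. Because the weights $\pi_{s'a'}^h \mathbb{E}_t(\widehat{p}_{sas'}^h)$ are nonnegative, Lemma \ref{variance} yields
\[
\mathbf{var}_t\widehat{Q}_{sa}^h \;\leq\; \nu_{sa}^h + \sum_{s',a'}\pi_{s'a'}^h \mathbb{E}_t(\widehat{p}_{sas'}^h)\,\mathbf{var}_t\widehat{Q}_{s'a'}^{h+1} \;\leq\; \nu_{sa}^h + \sum_{s',a'}\pi_{s'a'}^h \mathbb{E}_t(\widehat{p}_{sas'}^h)\, w_{s'a'}^{h+1} \;=\; w_{sa}^h ,
\]
which closes the induction. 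Assumption \ref{acyclic} is used here only indirectly, through its role in the proof of Lemma \ref{variance}.

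The statement is essentially a two-line induction, so there is no genuine obstacle; the only point I would be careful about is the well-posedness and monotonicity of the recursion. Concretely, I would verify that the per-transition coefficients are nonnegative and that $\sum_{s',a'}\pi_{s'a'}^h\mathbb{E}_t(\widehat{p}_{sas'}^h)=1$ (which holds since $\pi^h$ is a policy and $\mathbb{E}_t\widehat{p}^h$ is the posterior mean of a transition distribution, so $\sum_{s'}\widehat{p}_{sas'}^h=1$ gives $\sum_{s'}\mathbb{E}_t\widehat{p}_{sas'}^h=1$), and that $\nu^h$ is finite under the standing assumptions, so that the backward pass is meaningful at every step.
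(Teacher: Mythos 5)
Your proposal is correct and follows essentially the same route as the paper: the paper also establishes existence and uniqueness by the backward recursion from $w^{H+1}=0$ (packaged as a monotone Bellman operator $\mathcal{W}^h$ with a citation to Bertsekas for the finite-horizon DP argument), and proves $w\geq \mathbf{var}_t\widehat{Q}$ by the identical downward induction combining Lemma \ref{variance} with the nonnegativity of the weights $\pi_{s'a'}^h\,\mathbb{E}_t(\widehat{p}_{sas'}^h)$. Your added checks on finiteness of $\nu^h$ and normalization of the coefficients are sensible housekeeping but do not change the argument.
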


A classical difficulty in Bayesian approaches is to compute the posterior distribution. The Bayesian central limit theorem (Result 8 in \citep{Berger}) ensures that under smoothness assumptions on the prior and likelihood functions, the posterior distribution converges to a Gaussian distribution. Thus, we get around tractability issues by approximating the posterior over robust Q-values as $\mathcal{N}(\bar{Q}, \mathbf{diag}(w))$, where $w$ is the solution to URBE and $\bar{Q}$ is the unique solution to:
\begin{equation*}
\begin{split}
   \bar{Q}_{sa}^h &= r_{sa}^h + \gamma \sum_{s', a'}\pi_{s'a'}^h \mathbb{E}_t(\widehat{p}_{sas'}^h) \bar{Q}_{s'a'}^{h+1},
   \end{split}
\end{equation*}
for $h = 1,\cdots, H$, and $\bar{Q}^{H+1}=0$.

\begin{remark}
We should emphasize that the quantity $\mathbb{E}_t(\widehat{p}_{sas'}^h)$ is the conditional expectation of the worst-case transition, and depends on the robust Q-values. Therefore, it is different from the nominal transition which only depends on observations.
\end{remark}


\section{ESTIMATION OF THE ROBUST LOCAL UNCERTAINTY}
\label{robust-local-uncertainty}
Lemma \ref{variance} reveals a quantity $\nu$ that only depends on local state and action pairs. We call it the \textit{robust local uncertainty}, since it also depends on the worst-case transitions. In this section, we present a practical method for estimating this quantity, which will be useful for implementing learning algorithms that take advantage of Theorem \ref{urbe-thm}. We first present the tabular representation in the robust setup. We then recall the linear function representation and neural network architectures from \citep{UBE} which directly enable to scale up the robust local uncertainty estimate. 

\subsection{TABULAR CASE}
Assume a Dirichlet prior $\varphi_{sa} := (\varphi_{sas'})_{s'\in \mathcal{S}}$ on  transitions that depart from $(s,a)\in \mathcal{S}\times \mathcal{A}$. For all $h = 1,\cdots, H$, the posterior distribution is Dirichlet:
\begin{equation*}
p_{sa}^h \mid \mathcal{F}_t \sim \text{Dir}(\varphi_{sa} + n_{sa}^h)   
\end{equation*}
where $n_{sa}^h := (n_{sas'}^h)_{s'\in \mathcal{S}}$ is the vector of counts for observation $(s,a,s')$ at step $h$, up to episode $t$. Therefore, given a posterior uncertainty set $\widehat{\mathcal{P}}^h_{sa}$ at step $h$, any $p_{sa}^h\in \widehat{\mathcal{P}}^h_{sa}$ satisfies the following:
\begin{equation*}
  \mathbf{var}_t p_{sas'} ^h \leq \frac{\varphi_{sas'} + n_{sas'}^h}{\left(\sum_{s'\in\mathcal{S}}(\varphi_{sas'} + n_{sas'}^h)\right)^2},
\end{equation*}
\begin{equation*}
  \mathbb{E}_t p_{sas'} ^h = \frac{\varphi_{sas'} + n_{sas'}^h}{\sum_{s'\in\mathcal{S}}(\varphi_{sas'} + n_{sas'}^h)}.
\end{equation*}
Since $\widehat{\mathcal{P}}^h_{sa}$ is a closed set, $\widehat{p}_{sas'}$ also satisfies these inequalities and
\begin{equation*}
  \frac{\mathbf{var}_t \widehat{p}_{sas'} ^h}{\mathbb{E}_t \widehat{p}_{sas'} ^h} \leq \frac{1}{\sum_{s'\in\mathcal{S}}(\varphi_{sas'} + n_{sas'}^h)} \leq \frac{1}{n_{sa}^h},
\end{equation*}
where $n_{sa}^h$ is the visit count of the agent from state $s$ and action $a$.
It follows that $\nu_{sa}^h \leq Q^2_{\max} \mid S\mid /n_{sa}^h$. Therefore, similarly to the non-robust setup, the robust local uncertainty can be modeled as a positive constant $\beta^2$ divided by the visit count $n_{sa}^h$.


\subsection{FUNCTION APPROXIMATION}
We now adapt the robust local uncertainty estimate to function approximation representations. Let $\widehat{Q}_{sa}^h \approx \phi_s^T\theta_a$ be a linear function approximation for the robust Q-values, where $\phi : \mathcal{S}\rightarrow \mathbb{R}^d$ designates state features and $\theta_a$ are parameters learned for each action $a \in \mathcal{A}$. Using the inverse count estimator $(\widehat{n}_{sa}^h)^{-1} = \phi_s^T (\Phi_a^T \Phi_a)^{-1} \phi_s$ introduced in the work \citep{UBE}, where $\Phi_a$ is the matrix of $\phi_s$-s stacked row-wise with action $a$ being taken at $s$, we estimate the robust local uncertainty by $\widehat{\nu}_{sa}^h = \beta^2 \phi_s^T (\Phi_a^T \Phi_a)^{-1} \phi_s$. As it receives a new sample $\phi$, the agent needs to update the matrix $\Sigma_a := (\Phi_a^T \Phi_a)^{-1}$, which can be implemented efficiently via the Sherman-Morrison-Woodbury formula \citep{golub}:
\begin{equation}
\label{sigmaUpdate}
    \Sigma_a^+ := \Sigma_a - (\Sigma_a \phi \phi^T \Sigma_a)/(1 + \phi^T\Sigma_a\phi).
\end{equation}
The neural network representation proceeds similarly, provided that we treat all layers as feature extractors and apply a linear activation function to the last layer. In that case, we still have  $\widehat{Q}_{sa}^h \approx \phi_s^T\theta_a$, where $\phi_s$ is the output of the last network layer for state $s$ and $\theta_a$ are the parameters of the last layer for action $a$. We use this technique in Algorithm \ref{dqn-urbe}.

\section{URBE-BASED ALGORITHMS}
\subsection{URBE ALGORITHM}
The URBE algorithm is described in Algorithm \ref{urbe-algo}. Its structure is similar to \citep{PSRL}, but involves using robust dynamic programming so as to learn a robust policy as well as posterior variance of the robust Q-values. At the beginning of each episode, an MDP model is sampled according to the current posterior distribution. The posterior uncertainty set is also updated according to new observations. Robust Q-values and its posterior variance are then computed using dynamic programming. At each step, the agent acts greedily with respect to the robust Q-values plus the posterior variance. 

\begin{algorithm}[!h]
   \caption{URBE}
   \label{urbe-algo}
\begin{algorithmic}
   \STATE {\bfseries Input:} Prior distribution $\phi_p$, confidence level $\psi$, $t=1$
    \STATE{\bfseries Initialize:} $t=1$. State and action $(s, a)\in\mathcal{S}\times\mathcal{A}$.
   \FOR {episodes $t = 1,\cdots$}
   \STATE Sample MDP $\sim \phi_p$
   \STATE Observe $s'$ and receive reward $r$\\
   \STATE Update posterior $\varphi_p$ and posterior uncertainty set $\widehat{\mathcal{P}}^h$
   \STATE Compute $\widehat{Q}_{s'b}^h$ and $w_{s'b}^h$ for all action $b$\\
   \STATE Sample $\zeta_b \sim\mathcal{N}(0,1)$ for all $b$ and compute:
   \begin{equation*}
       a' = \arg \max_b \left(\widehat{Q}_{s'b}^h + \zeta_b \sqrt{w_{s'b}^h} \right)
   \end{equation*}
\STATE Take action $a'$
\STATE $s\leftarrow s', a\leftarrow a'$
   \ENDFOR
\end{algorithmic}
\end{algorithm}

\begin{algorithm}[!h]
   \caption{DQN - URBE}
   \label{dqn-urbe}
\begin{algorithmic}
   \STATE {\bfseries Input:} Neural network for robust $Q$ and $w$ estimates;
   Robust DQN subroutine $\mathtt{robust DQN}$; Hyperparameter $\beta>0$
    \STATE{\bfseries Initialize:} $\Sigma_a = \mu \cdot I$ for $a\in\mathcal{A}$ with $\mu >0$; Initial state and action $(s, a) \in \mathcal{S}\times\mathcal{A}$\\

   \FOR {$t = 1,\cdots$}
   \FOR{$h=2$ {\bfseries to} $H+1$}
   \STATE Retrieve $\phi(s)$ from robust $Q$-network
   \STATE Observe $s'$ and receive reward $r$\\
   \STATE Compute $\widehat{Q}_{s'b}^h$ and $w_{s'b}^h$ for all action $b$\\
   \STATE Sample $\zeta_b \sim\mathcal{N}(0,1)$ for all $b$ and compute:
   \begin{equation*}
   \begin{split}
       a' &= \arg \max_b \left(\widehat{Q}_{s'b}^h + \beta\zeta_b \sqrt{w_{s'b}^h} \right) \text{ and }\\
y &= \left\{
    \begin{array}{ll}
        \phi(s)^T\Sigma_a\phi(s) & \mbox{if } h = H+1 \\
        \phi(s)^T\Sigma_a\phi(s)  + \gamma^2 w_{s'a'}^h& \mbox{otherwise}
    \end{array}
\right.
\end{split}
\end{equation*}
\STATE Take gradient step on $w$ w.r.t. loss $(y - w_{sa}^{h-1})^2$
\STATE Update robust Q-values using $\mathtt{robust DQN}$
\STATE Update $\Sigma_a$ according to equation (\ref{sigmaUpdate})
\STATE Take action $a'$
\STATE $s\leftarrow s', a\leftarrow a'$
   \ENDFOR
   \ENDFOR
\end{algorithmic}
\end{algorithm}

\subsection{DQN-URBE}
Since the URBE algorithm requires solving a robust optimization problem at each episode, it is computationally costly and not scalable. Therefore, we present our DQN-URBE algorithm (Algorithm \ref{dqn-urbe}), which avoids this problem by keeping the uncertainty set fixed and finite but adds the robust local uncertainty as an exploration bonus. 

The robust Bellman equation utilizes a robust TD-error as a loss criterion for learning a minimax policy \citep{modelmismatch, kalman}. The robust TD error to be minimized is defined as: 
\begin{equation*}
\begin{split}
  \delta^h :=  r(s^h, a^h) &+ \gamma \inf_{p\in \mathcal{P}} \sum_{s'\in\mathcal{S}} p(s^h, a^h, s')\max_{a'\in\mathcal{A}}  Q(s', a') \\
  &- Q(s^h, a^h),  
  \end{split}
\end{equation*}
where the uncertainty set is fixed. Works \citep{robustOptions, SRAC} used this method in deep robust RL and considered a finite uncertainty set of models. The resulting performance has been shown to lead to robust yet overly conservative behavior.

In order to generate a less conservative solution, DQN-URBE 
takes the posterior variance of the robust Q-values into account. We should note that several of the assumptions that have been made and used for estimating the robust local uncertainty are being violated in deep settings. Indeed, transition models are no longer acyclic, the policy we estimate the posterior variance on is no longer fixed, and URBE is not solved exactly but approximated by a sub-network of the robust Q-network. However, this heuristic approach works well in practice, as it will be shown in the next section. 
\begin{figure}[!!h]
\centering
\includegraphics[width = 1.0\columnwidth]{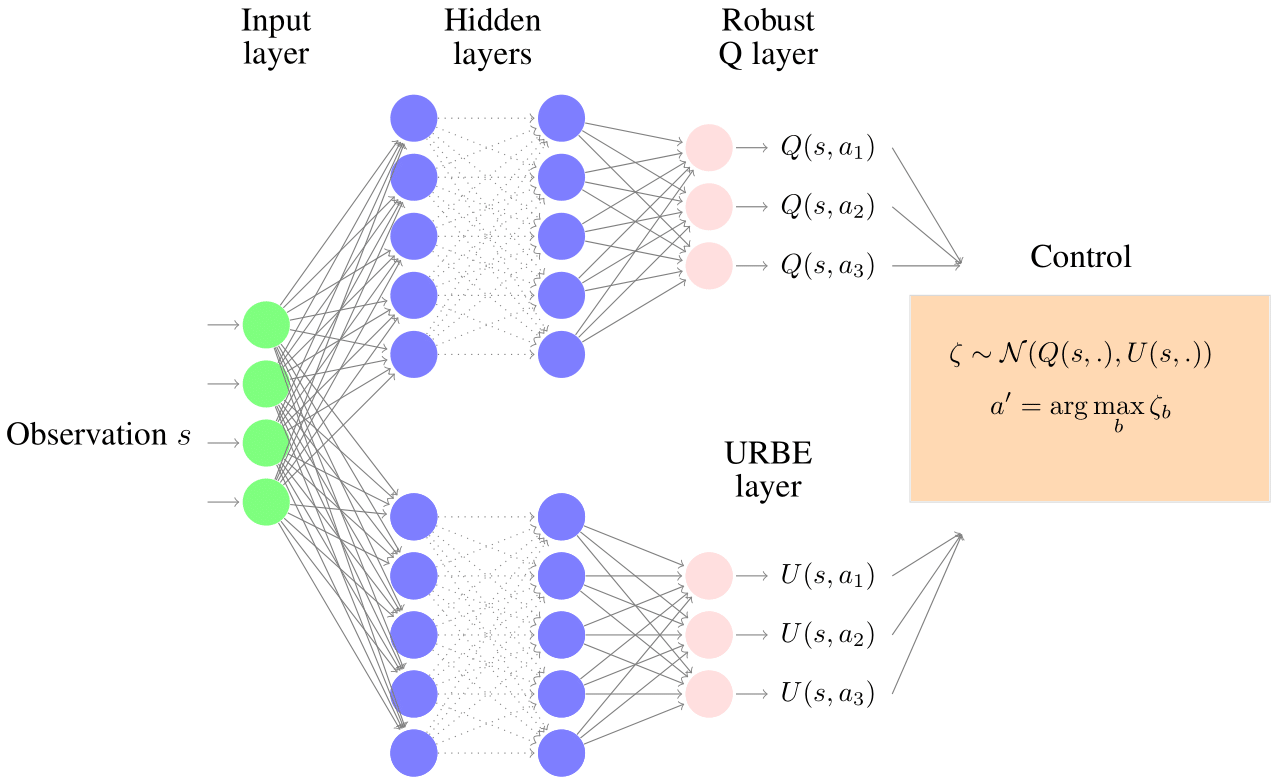}
\caption{DQN-URBE architecture}
\label{nn-urbe}
\end{figure}

DQN-URBE consists of a neural network architecture that has two output heads, as shown in Figure \ref{nn-urbe}. The first head attempts to learn the optimal robust Q-function of a fixed uncertainty set via the robust-DQN subroutine described in \citep{kalman}. It is similar to regular DQN except that it utilizes the robust TD-error as a loss criterion. The other head attempts to estimate the robust uncertainty for the robust Q-function, as mentioned in Section \ref{posterior-variance-section}. The robust local uncertainty is estimated using the function approximation method described in Section \ref{robust-local-uncertainty}. This defines the loss function to minimize for learning the robust local uncertainty parameters. We added stop-gradients to prevent the posterior variance from affecting the robust Q-network parameters and vice-versa. At each step, the agent acts greedily with respect to the robust Q-function plus the robust local uncertainty to encourage exploration.

\begin{figure*}[htp]
	\centering
	
	\subfigure[]{\label{simplemdp}
	\resizebox {.3\linewidth} {!} {
	\begin{tikzpicture}[scale = 1]
  \node[state] at (3,0) (0) {$s_0$};
  \node[state] at (3,3) (1) {$s_1$};
  \node[state] at (6,1.5) (2) {$s_2$};
  \node[state] at (9,2.5) (3) {$s_3$};
  \node[state] at  (6,0) (4) {$s_4$};
  \node[state] at (6,-1.5) (5) {$s_5$};
  \node[state] at (9,-2) (6) {$s_6$};

  \draw[every loop]
  	(0) edge[] node[below, pos = .5, xshift = 3.5mm] {$a_1$} (1)
	(0) edge node[above, pos = 0.5] {$a_2$} (2)
	(0) edge node[above, pos = 0.5] {$a_3$} (4)
	(0) edge[] node[above, pos = 0.5] {$a_4$} (5)

	(2) edge [dashed] node[above, pos = 0.5] {} (4)
	(4) edge [dashed] node[above, pos = 0.5] {} (5)
	(5) edge [dashed] node[above, pos = 0.5] {} (6)
	
	(2) edge[] node[above, pos = 0.6,yshift = 1mm] {} (3)
	(4) edge[] node[above, pos = 0.6,yshift = 1mm] {} (3)
	(5) edge[] node[above, pos = 0.6,yshift = 1mm] {} (3)
	
	(3) edge[bend right] node[above, pos = 0.1] {\mbox{\Large $R_{\text{good}}$}} (0)
	(6) edge[bend left] node[below, pos = 0.1, yshift = -1mm] {\mbox{\Large $R_{\text{bad}}$}} (0)
	(1) edge[bend right] node[above, pos = .1, xshift = -9mm]{\mbox{\Large $R_{\text{minimax}}$}} (0);
\end{tikzpicture}}}
	\subfigure[]{\label{simplemdp-performance}
	\includegraphics[width=0.3\linewidth,height=0.7\textheight,keepaspectratio]{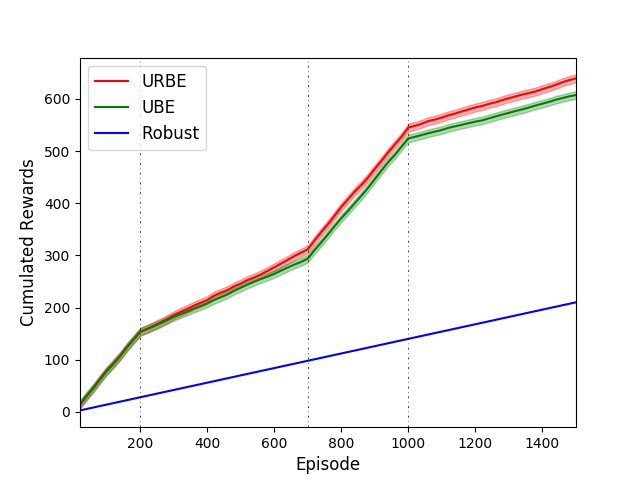}}
	\subfigure[]{\label{marsroverGrid}
	\includegraphics[width=0.3\linewidth,height=0.3\textheight,keepaspectratio]{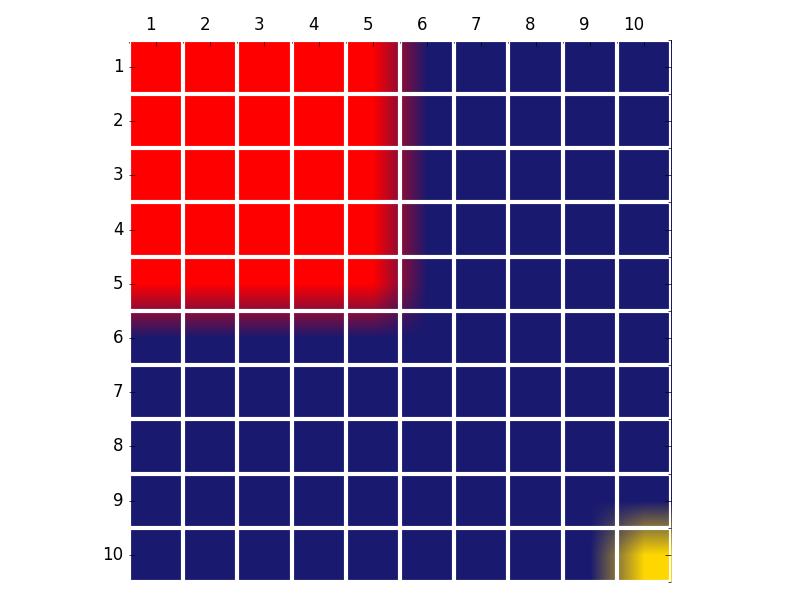}}
	
	\caption{(a) Simple MDP illustration, with initial and terminating state $s_0$. (b) Comparison of the accumulated rewards on the simple MDP. The vertical lines mark changing dynamics. (c) Mar's Rover domain. Starting states are randomly chosen from the red zone. The goal state is in orange.} 
\end{figure*}

\section{EXPERIMENTS}
\label{experiments-section}
In this section, we test the performance of our URBE-based approach on three different domains: a toy MDP, a Mar's Rover domain and Cartpole. We first execute URBE on a toy MDP and analyze its performance under changing dynamics. We then propose DQN-URBE, a deep RL algorithm that scales our URBE approach to higher dimensional domains. We run and analyze the performance of DQN-URBE on a Mar's rover domain and Cartpole. In each case, we compare the robust DQN-URBE policy to three baselines: (1) a vanilla DQN; (2) an overly conservative, robust DQN agent and (3) a DQN that uses UBE for exploration \citep{UBE}. Neural network structures and hyperparameters can be found in the Appendix.

\subsection{SIMPLE MDP}
We consider a variant of the 7-state MDP introduced in \citep{OLRMfull}, which is illustrated in Figure \ref{simplemdp}. The agent starts from state $s_0$ and chooses one of $4$ actions. Action $a_1$ leads to a purely deterministic outcome, whereas $a_2, a_3$ and $a_4$ may be subject to adversarial transitions and lead the agent to either state $s_3$ or $s_6$. The agent is brought back to the terminating state $s_0$ once it reached $s_1$, $s_3$ or $s_6$. These latter states are the only ones with non-zero rewards, although in practice, we set $R(s_6) =0$, $R(s_1) = 0.14$ and $R(s_3) = 1$. 

This MDP captures the main characteristics of a grid-world domain in which the agent must reach a gold state under adversarial transitions. At any episode, an adversary can choose any transition probability $p(s_3)$ for the agent to reach the gold state $s_3$ from $s_2, s_4$ or $s_5$. If it behaves nicely and $p(s_3) = 1$, the agent can achieve maximal reward. However, if $p(s_3) = 0$, the agent is brought to the "bad state" $s_6$ in case it did not choose action $a_1$, and thus gets minimal reward. For a fixed uncertainty that accounts for all adversarial transitions, a minimax-optimal policy corresponds to constantly taking action $a_1$.

Figure \ref{simplemdp-performance} shows the accumulated rewards over running time for the described MDP, and each vertical line marks a change in the adversarial probability $p(s_3)$. We successively set such probability to $0.001, 0.8, 0.1$ and  $0.9$. Cumulative rewards have been averaged over $10$ runs for UBE and URBE, whereas the performance of the robust policy is deterministic. As we can see in the figure, the robust agent is overly conservative although its reward is stable under adversarial transitions. Also, since the UBE-based agent does not account for adversarial transitions, it performs worse than URBE. 

\subsection{MAR'S ROVER}
We extend the size of the previous MDP and consider a $10\times 10$ grid-world domain inspired by \citep{chowCVAR, RCPO}. A rover starts at a random state from the top left of the grid (red zone in Figure \ref{marsroverGrid}) and is required to travel to the goal located in the bottom right corner (orange square in Figure \ref{marsroverGrid}) in less than $200$ steps so as to get a high reward $R_{\text{success}}$. The transition is stochastic. On each step, if it chooses to move towards the goal, the agent may be brought back to a final state and get a negative reward $R_{\text{fail}}$ with probability $p$. Otherwise, it moves into the chosen direction and receives a small negative reward $R_{\text{step}}$. 

We trained vanilla DQN, robust DQN, DQN-UBE and DQN-URBE on a nominal probability of failure $p = 0.005$. Uncertainty sets were generated by sampling $15$ probabilities $p$ in $(0,1)$. Figure \ref{marsrover} shows the testing performance of each strategy over different probabilities. We see that the robust agent is unable to reach the goal state, even on the nominal model. However, it is never brought back to the failing state but rather avoids moving towards the goal state, which explains its stable performance. Similarly to vanilla DQN, the UBE agent performs well on the nominal but is most sensitive to changing dynamics. Its reward gets even worse than robust DQN above $p = 0.2$, since it tries to move towards the goal but is barred by adversarial transitions. During testing, DQN-URBE reaches high reward on the nominal model and shows less sensitivity to increasing probabilities. It is therefore less conservative than robust DQN but stays robust to model misspecification.

We further investigated the trajectories of three agents across the grid, under appropriate dynamics. A number of $100$ testing episodes were run. Figure 4 represents heatmaps of states that were attained, with their proportion of visits. These are visualized in four colors, ranging from the lowest to the highest proportion: dark blue, cyan, yellow and brown. Figures \ref{marsRover-robust} and \ref{marsRover-urbe} correspond to testing episodes on the nominal model $p = 0.005$ for robust DQN and URBE, respectively. The robust agent never reaches the goal, while URBE shows high proportion of visitation on the winning state. In Figures \ref{marsRover-ube-change} and \ref{marsRover-urbe-change}, a higher probability of failure ($p = 0.2$) has been used to test the robustness of DQN-UBE against DQN-URBE. The URBE agent clearly shows more robustness than UBE, as it reaches the goal state under this mispecified model.

\begin{figure}[!!h]
\centering
\includegraphics[width = 1.0\columnwidth]{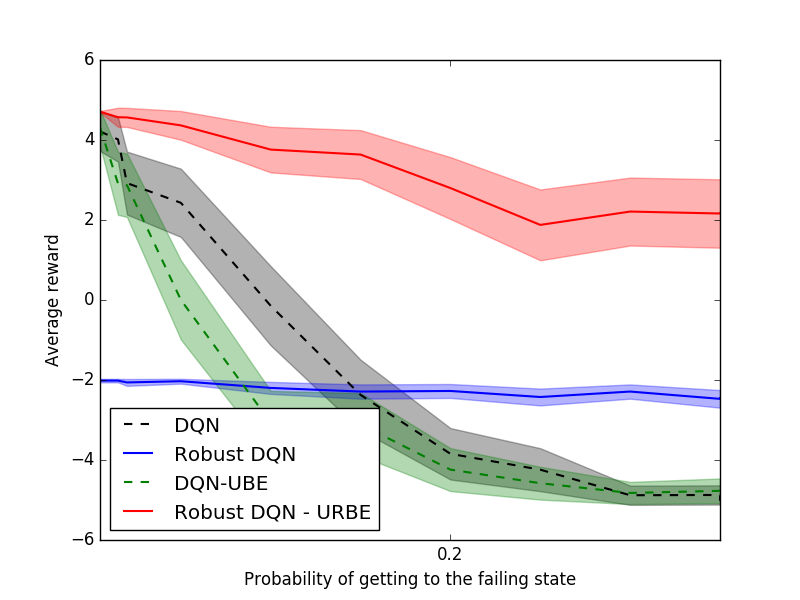}
\caption{Testing rewards on Mar's Rover.}
\label{marsrover}
\end{figure}

\begin{figure*}[htp]
	\centering
	\label{heatmaps}
	\subfigure[]{\label{marsRover-robust}
	\includegraphics[width=0.24\linewidth,height=0.5\textheight,keepaspectratio]{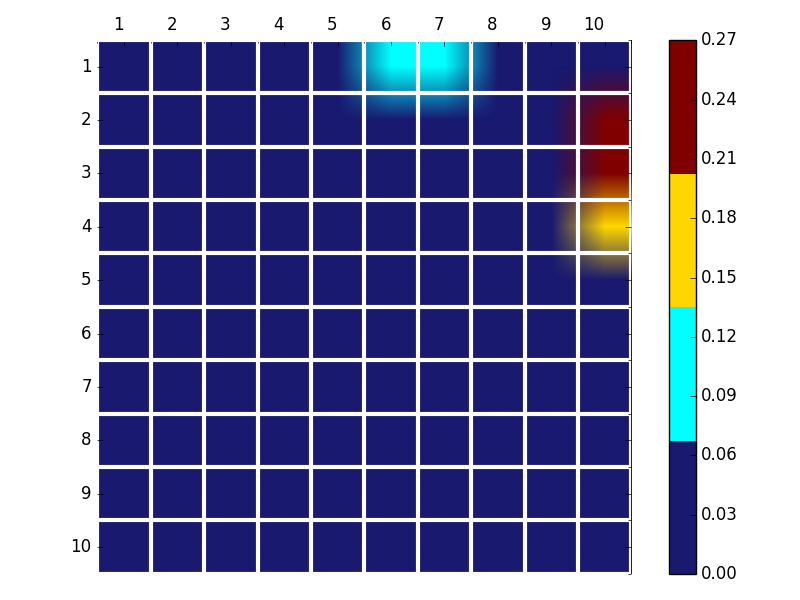}}
	\subfigure[]{\label{marsRover-urbe}
	\includegraphics[width=0.24\linewidth,height=0.5\textheight,keepaspectratio]{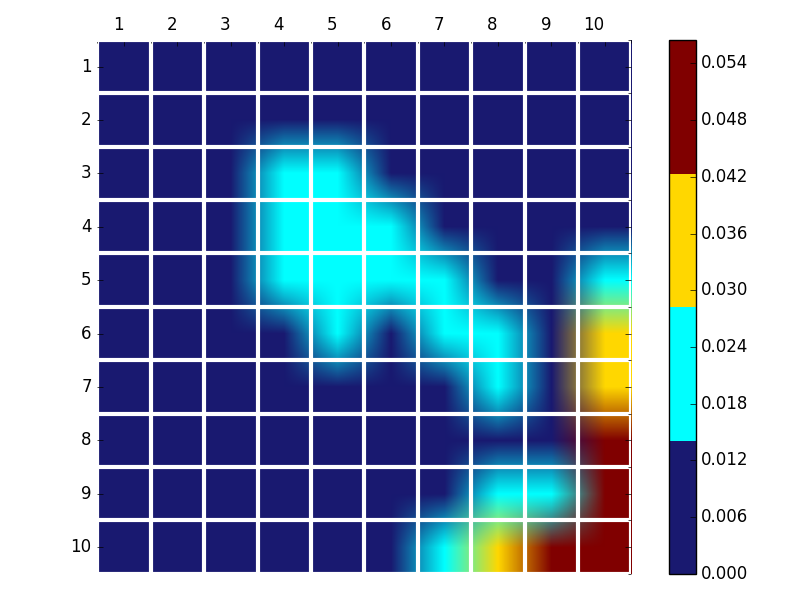}}
	\subfigure[]{\label{marsRover-ube-change}
	\includegraphics[width=0.24\linewidth,height=0.5\textheight,keepaspectratio]{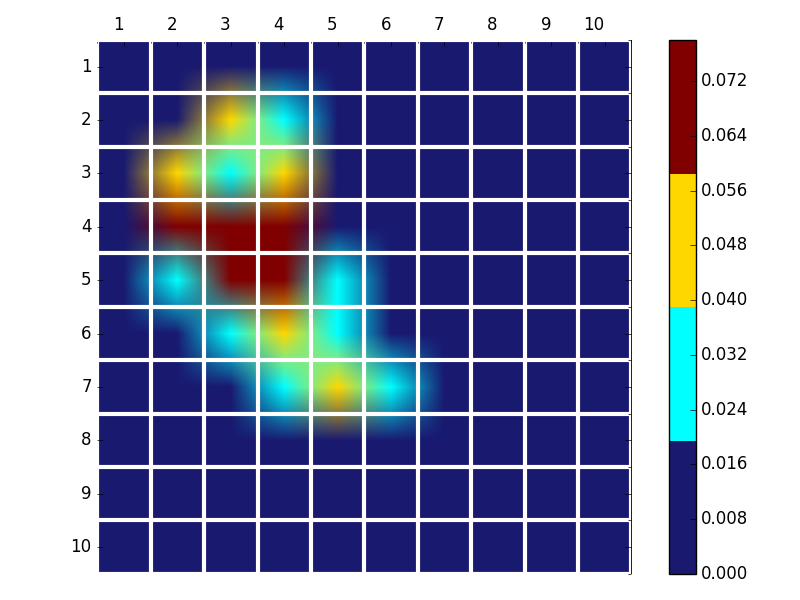}}
	\subfigure[]{\label{marsRover-urbe-change}
	\includegraphics[width=0.24\linewidth,height=0.5\textheight,keepaspectratio]{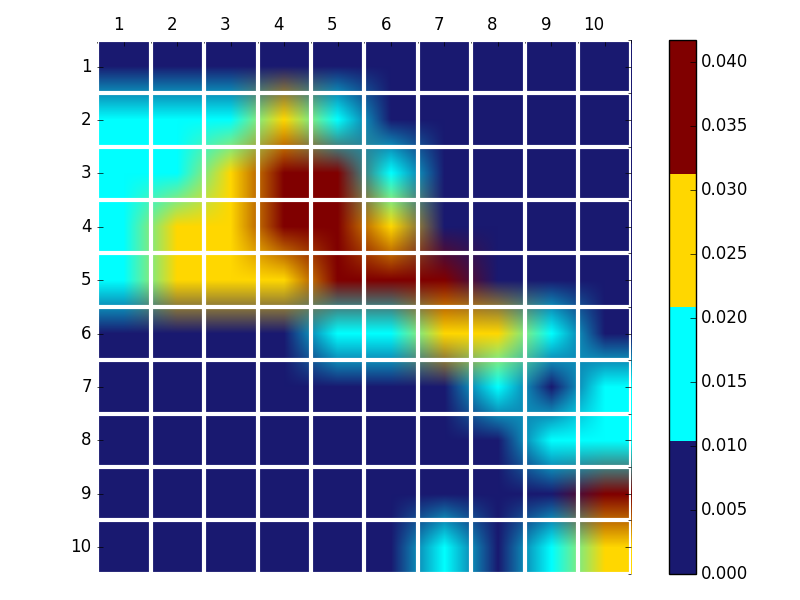}}
	\caption{Mar's Rover heatmaps of state visitations during $100$ testing episodes. (a) Robust DQN on $p = 0.005$. (b) DQN-URBE on $p = 0.005$. (c) DQN-UBE on $p = 0.2$. (d) DQN-URBE on $p = 0.2$. Robust DQN is too conservative compared to URBE, while UBE is less robust than URBE.} 
\end{figure*}


\subsection{CARTPOLE}
In Cartpole, the agent's goal is to balance a pole atop a cart in a vertical position. The system corresponds to a continuous MDP where each state is a 4-tuple $\langle x, \dot{x},\theta, \dot{\theta} \rangle$
representing the cart position, the cart speed, the pole angle with respect to the vertical and its angular speed respectively.   
The agent can make one of two actions: apply a constant force either to the right or to the left of the pole. It gets 
a reward of $1$ if the pole has not fallen down and if it stayed in the boundary sides of the screen. If it
terminates, the agent receives a reward of $0$. Each episode lasts for $200$ steps.

\begin{figure}[!!h]
\centering
\includegraphics[width = 1.0\columnwidth]{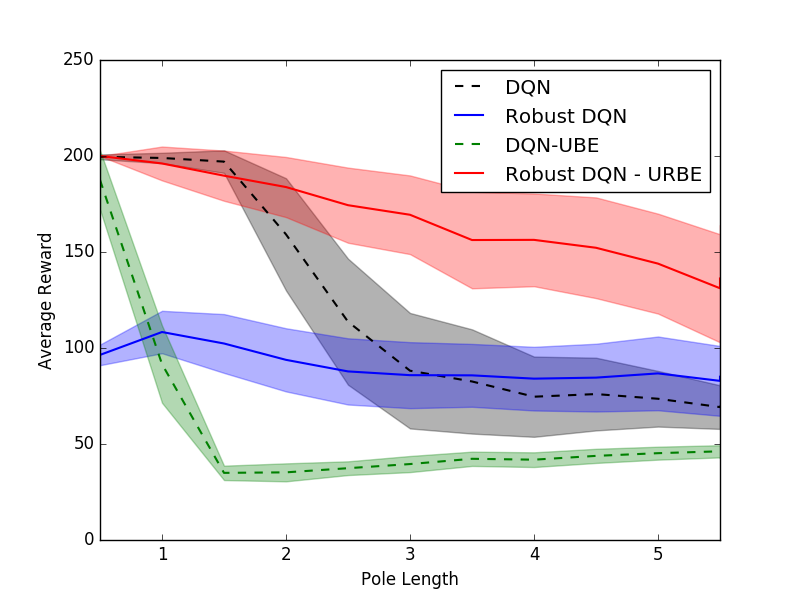}
\caption{Testing rewards on Cartpole.}
\label{cartpole-testRwd}
\end{figure}

All agents have been trained on a nominal pole length of $0.75$. Uncertainty sets were generated by sampling $15$ lengths from a normal distribution centered at the nominal. The agents were then tested over $200$ episodes on different pole lengths. Figure \ref{cartpole-testRwd} shows their average reward during testing. Robust DQN is overly conservative on the nominal length although it stays robust to model misspecification, compared to DQN-UBE which is most sensitive to changing pole lengths. On the other hand, DQN-URBE shows the best trade-off between less conservativeness on the nominal and robustness to higher lengths. This leads it to perform best on the nominal length and on higher ones as well.

\begin{figure*}[htp]
\label{testingPerformance}
	\centering
	\subfigure[]{\label{cartpole-urbe}
	\includegraphics[width=0.3\linewidth,height=0.5\textheight,keepaspectratio]{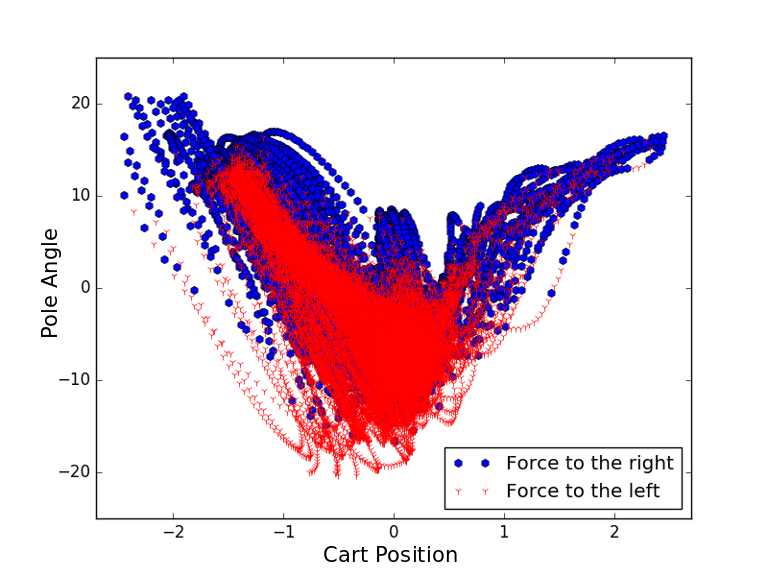}}
	\subfigure[]{\label{cartpole-robust}
	\includegraphics[width=0.3\linewidth,height=0.5\textheight,keepaspectratio]{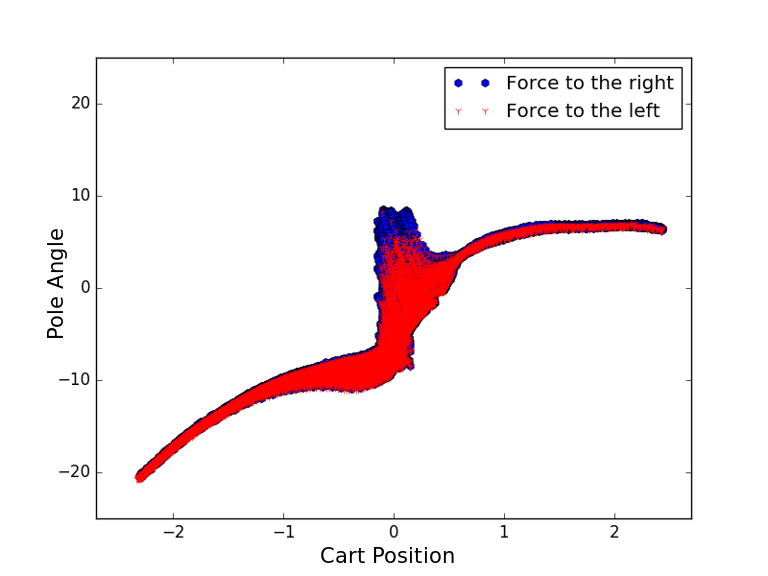}}
	\subfigure[]{\label{changedyn-cartpole}
	\includegraphics[width=0.3\linewidth,height=0.5\textheight,keepaspectratio]{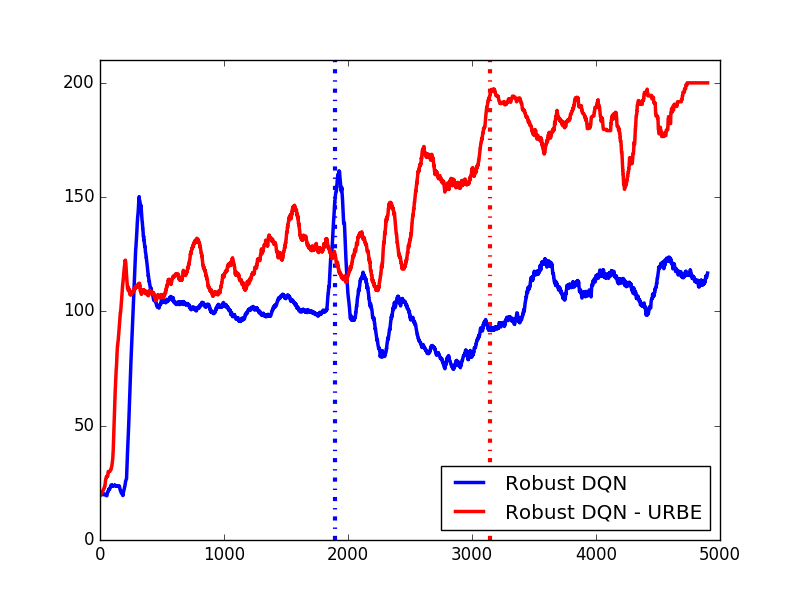}}
	\caption{(a-b) Cartpole on the nominal model. These figures show the states $\langle x, \theta\rangle$ attained and the two colors correspond to the action applied on these states. (a) DQN-URBE; (b) Robust DQN. (c) Training score of DQN-URBE and robust DQN. Vertical lines mark changes of the pole length. DQN-URBE explores more than robust DQN while it stays robust to changing dynamics.} 
\end{figure*}

In order to further test the exploration capacity of the robust agents, we also compared the projected states $\langle x, \theta \rangle$ attained on the nominal model during $200$ testing episodes. Figures \ref{cartpole-urbe} and \ref{cartpole-robust} show that the span of visited states for the robust agent is quite limited compared to URBE, which may explain its overly conservative behavior. We also compared the sensitivity of robust DQN with that of DQN-URBE to changing dynamics during training. In practice, we waited for both agents to converge before changing the pole length from $0.75$ to $1.25$. We also augmented the number of training episodes from $4000$ to $5000$. Figure \ref{changedyn-cartpole} shows a smoothed version of the training curve. Changes in dynamics are marked by vertical lines, and each color is that of the corresponding agent. As can be seen in the figure, although URBE converges more slowly, it recovers much faster than robust DQN which does not recover at all to its optimal reward. Moreover, URBE is able to reach maximal reward.

\begin{table*}[!h]
\caption{Comparison of previous approaches with URBE}
\label{tab}
\vskip 0.15in
\begin{center}
\begin{small}
\begin{sc}
\begin{tabular}{lcccr}
\toprule

\strut \textbf{Reference} & \vtop{\hbox{\strut \textbf{Mitigates RMDPs}}\hbox{\strut \textbf{Conservativeness}}} & \vtop{\hbox{\strut \textbf{Online Learning of}}\hbox{\strut \textbf{an Uncertainty Set}}} & \textbf{Scalability} \\
\midrule
URBE (this paper)    & $\surd$ &  $\surd$  & $\surd$ \\
\citep{goyalRectangularity}      & $\surd$&  $\times$& $\times$\\
\citep{petrikNonrectangular}    &  $\surd$ & $\times$ & $\surd$ \\
\citep{petrikBayes2}    &  $\surd$ & $\times$ & $\times$ \\
\citep{SRAC}    & $\surd$&  $\times$&     $ \surd$   \\
\citep{OLRMfull} & $\surd$ &  $\surd$ & $\times$\\
\citep{LDST,krectangularity}      & $\surd$&  $\times$& $\times$\\
\citep{DRMDP1, DRMDP2}      & $\surd$&  $\times$& $\times$\\
\bottomrule
\end{tabular}
\end{sc}
\end{small}
\end{center}
\vskip -0.1in
\end{table*}

\section{RELATED WORK}
Several methods have been proposed to mitigate conservativeness in robust RL. These are indicated in Table \ref{tab}. These works can be collectively grouped into three distinct approaches for mitigating the conservativeness of RMDPs. The first one focuses on circumventing rectangular uncertainty sets. This includes works \citep{LDST, krectangularity} that consider coupled uncertainties which still lead to tractable solutions for the robust RL problem. Similarly, in \citep{petrikNonrectangular}, the authors proposed the construction of non-rectangular uncertainty sets that take advantage of transfer knowledge between states. More recently, the work \citep{goyalRectangularity} introduced an uncertainty set structured as an ambiguous linear function of a factor matrix and showed that the underlying minimax policy is computationally tractable. 

A second approach for overcoming conservativeness of robust RL is to consider a distribution over the uncertainty set rather than its worst-case model. In \citep{DRMDP1, DRMDP2}, structural information on parameter distribution is assumed. It is used for deriving an optimal policy under the worst parameter distribution using distributionally robust optimization. In \citep{SRAC}, the authors showed that if we fix such a distribution, the corresponding optimal policy interpolates between being aggressive and robust. Nonetheless, as indicated in the second column of Table \ref{tab}, all of these approaches address the problem of planning in robust MDPs or its variants without learning the uncertainty set in an online manner. Therefore, they cannot adapt it to changing dynamics.

Conversely, a third approach involves learning adversarial transitions and/or rewards. The work \citep{OLRMfull} considered the problem of learning the uncertainty set in a frequentist setting and used OFU methods for detecting adversarial states and updating the uncertainty set accordingly. Although the resulting algorithm is online and provably efficient, it is not scalable to large domains and its efficiency strongly relies on the statistical analysis. Using a Bayesian setting in \citep{petrikBayes2}, the authors addressed an algorithm that constructs Bayesian uncertainty sets in a safe manner. However, besides learning an uncertainty set offline with a \textbf{fixed} batch of data, their method is not scalable because of its computational cost. In contrast, our proposed approach aims to adapt the level of robustness iteratively and online from a dynamic stream of data.

\section{CONCLUSION}
We presented a Bayesian approach to learning less conservative solutions when solving Robust MDPs. This is achieved using the Uncertainty Robust Bellman Equation (URBE), our adaptation of the UBE equation, which encourages safe exploration and implicitly modifies the uncertainty set online using new observations. We scale this approach to higher dimensional domains using the DQN-URBE algorithm and show the ability of the agent to learn less conservative solutions in a toy MDP, a Mar's rover domain and Open AI gym's Cartpole domain. Finally, we show the ability of the agent to adapt to changing dynamics  significantly faster than a robust DQN agent during training. Our approach shed light on the advantages of adding a variance bonus to robust Q-learning for encouraging safe exploration in lowering the conservativeness of robust strategies. Further work should analyze the asymptotic behavior of our URBE-based method as well as the impact of the size of the posterior uncertainty set on the posterior variance of robust Q-values.

\subsubsection*{Acknowledgements}
The authors would like to thank Chen Tessler for his help and useful comments on this work. E. Derman would
like to acknowledge the support of the Israel Science Foundation grant. 

\bibliographystyle{siam}
\bibliography{URBEbib}

\appendix
\onecolumn
\section*{\hfil A Bayesian Approach to Robust Reinforcement Learning - Appendix \hfil}

\section{Theoretical Proofs}

Recall the assumptions made in the paper:
\begin{assumption}
\label{acyclic2}
For any episode, the graph resulting from a worst-case transition model is directed and acyclic.
\end{assumption}

\begin{assumption}
\label{Rbound2}
For all $(s,a)\in\mathcal{S}\times\mathcal{A}$, the rewards are bounded: $-R_{\max} \leq r_{sa}\leq R_{\max}$. This implies that the robust Q-value is bounded as well: $\mid Q_{sa}^h\mid \leq H R_{\max} =: Q_{\max}$.
\end{assumption}

Recall also the worst-case transition from a posterior uncertainty set:
\begin{equation*}
\begin{split}
  \widehat{Q}_{sa}^h &= r_{sa}^h + \gamma\inf_{p \in \widehat{\mathcal{P}}_{sa}^h}\sum_{s', a'}\pi_{s'a'}^h p_{sas'} \widehat{Q}_{s'a'}^{h+1} \enspace ,
  \end{split}
\end{equation*}
with $\widehat{Q}_{sa}^{H+1} = 0$ and
\begin{equation}
\label{p-post2}
\begin{split}
  \widehat{p}_{sa}^h &\in \arg \min_{p \in \widehat{\mathcal{P}}_{sa}^h}\sum_{s', a'}\pi_{s'a'}^h p_{sas'} \widehat{Q}_{s'a'}^{h+1}
\end{split}
\end{equation}
is a worst-case transition at step $h$.

\subsection{Proof of Lemma 4.1}
\begin{lemma}
\label{variance2}
Under Assumptions \ref{acyclic2} and \ref{Rbound2}, for any worst-case transition $\widehat{p}$ as defined in equation (\ref{p-post2}),
the conditional variance of the robust Q-values under the posterior distribution satisfies the robust Bellman inequality:
\begin{equation*}
    \mathbf{var}_t \widehat{Q}_{sa}^h \leq \nu_{sa}^h + \gamma^2 \sum_{s', a'}\pi_{s'a'}^h \mathbb{E}_t \left(\widehat{p}_{sas'}^h \right) \mathbf{var}_t\widehat{Q}_{s'a'}^{h+1},
\end{equation*}
with $\mathbf{var}_t\widehat{Q}^{H+1} = 0$ and $\nu_{sa}^h := Q^2_{\max} \sum_{s'\in\mathcal{S}}\frac{\mathbf{var}_t \widehat{p}_{sas'} ^h}{ \mathbb{E}_t \widehat{p}_{sas'} ^h}$.
\end{lemma}

\begin{proof}
The proof for the robust setup follows the same line as in \citep{UBE} and is given here for completeness.

First rewrite the conditional variance:
\begin{equation*}
    \begin{split}
      &\text{var}_t(\widehat{Q}_{sa}^h) := \mathbb{E}_t \left( \widehat{Q}_{sa}^h - \mathbb{E}_t \widehat{Q}_{sa}^h \right)^2  \\
      &= \mathbb{E}_t \left( \gamma \inf_{p \in \widehat{\mathcal{P}}_{sa}^h}\sum_{s', a'}\pi_{s'a'}^h p^h_{sas'} \widehat{Q}_{s'a'}^{h+1} - \gamma \mathbb{E}_t \inf_{p \in \widehat{\mathcal{P}}_{sa}^h}\sum_{s', a'}\pi_{s'a'}^h p^h_{sas'} \widehat{Q}_{s'a'}^{h+1} \right)^2\\
      &= \gamma^2 \mathbb{E}_t \left(\sum_{s', a'}\pi_{s'a'}^h \widehat{p}^h_{sas'} \widehat{Q}_{s'a'}^{h+1} -\mathbb{E}_t \sum_{s', a'}\pi_{s'a'}^h \widehat{p}^h_{sas'} \widehat{Q}_{s'a'}^{h+1} \right)^2\\
      &= \gamma^2\mathbb{E}_t \left(\sum_{s', a'}\pi_{s'a'}^h \left(\widehat{p}^h_{sas'} \widehat{Q}_{s'a'}^{h+1} -\mathbb{E}_t \widehat{p}^h_{sas'} \widehat{Q}_{s'a'}^{h+1} \right)\right)^2,
    \end{split}
\end{equation*}
where we used the following definitions:
\begin{equation*}
\begin{split}
  &\widehat{Q}_{sa}^h= r_{sa}^h + \gamma \inf_{p \in \widehat{\mathcal{P}}_{sa}^h}\sum_{s', a'}\pi_{s'a'}^h p_{sas'}^h \widehat{Q}_{s'a'}^{h+1}\\
  &\widehat{p}_{sa}^h \in \arg \inf_{p \in \widehat{\mathcal{P}}_{sa}^h}\sum_{s', a'}\pi_{s'a'}^h p_{sas'}^h \widehat{Q}_{s'a'}^{h+1}.
  \end{split}
\end{equation*}
Assume that $\mathbb{E}_t \widehat{p}_{sas'} >0$ for all $h, s, a, s'$ belonging to the adequate sets. Since any worst-case transition satisfies $\sum_{s'} \widehat{p}^h_{sas'} =1$, we have $\sum_{s', a'}\pi_{s'a'}\mathbb{E}_t \widehat{p}^h_{sas'} =1$ and $\pi_{s'a'}\mathbb{E}_t \widehat{p}^h_{sas'}$ defines a probability distribution over states and actions. Thus, 
\begin{equation*}
\begin{split}
    \quad \mathbb{E}_t \left(\sum_{s', a'}\pi_{s'a'}^h \left(\widehat{p}^h_{sas'} \widehat{Q}_{s'a'}^{h+1} -\mathbb{E}_t \widehat{p}^h_{sas'} \widehat{Q}_{s'a'}^{h+1} \right)\right)^2
    &= \mathbb{E}_t \left(\sum_{s', a'}\pi_{s'a'}^h\frac{\mathbb{E}_t \widehat{p}^h_{sas'}}{\mathbb{E}_t \widehat{p}^h_{sas'}} (\widehat{p}^h_{sas'} \widehat{Q}_{s'a'}^{h+1} -\mathbb{E}_t \sum_{s', a'} \widehat{p}^h_{sas'} \widehat{Q}_{s'a'}^{h+1} )\right)^2\\
    &\leq \sum_{s', a'}\pi_{s'a'}^h \frac{\mathbb{E}_t \widehat{p}^h_{sas'}}{\left(\mathbb{E}_t \widehat{p}^h_{sas'}\right)^2} \mathbb{E}_t\left(\widehat{p}^h_{sas'} \widehat{Q}_{s'a'}^{h+1} -\mathbb{E}_t \sum_{s', a'} \widehat{p}^h_{sas'} \widehat{Q}_{s'a'}^{h+1} \right)^2, 
\end{split}
\end{equation*}
by applying Jensen's inequality to the convex function $x\mapsto x^2$. Therefore, 
\begin{equation*}
\begin{split}
\text{var}_t(\widehat{Q}_{sa}^h) \leq  \sum_{s', a'}\pi_{s'a'}^h \frac{\mathbb{E}_t \widehat{p}^h_{sas'}}{\left(\mathbb{E}_t \widehat{p}^h_{sas'}\right)^2} \mathbb{E}_t\left(\widehat{p}^h_{sas'} \widehat{Q}_{s'a'}^{h+1} -\mathbb{E}_t  \widehat{p}^h_{sas'} \widehat{Q}_{s'a'}^{h+1} \right)^2
\end{split}
\end{equation*}
Rewriting 
$\widehat{Q}_{s'a'}^{h+1} = r_{s'a'}^{h+1} + \gamma \inf_{p \in \widehat{\mathcal{P}}_{s'a'}^{h+1}}\sum_{s'', a''}\pi_{s''a''}^{h+1} p_{s''s'a'}^{h+1} \widehat{Q}_{s''a''}^{h+2}$ 
and 
$\widehat{p}_{sa}^h = \arg \inf_{p \in \widehat{\mathcal{P}}_{sa}^h}\sum_{s', a'}\pi_{s'a'}^h p_{sas'}^h \widehat{Q}_{s'a'}^{h+1}$ 
enables us to claim that under Assumption \ref{acyclic2}, $\widehat{p}_{sa}^h $ is independent of $\widehat{Q}_{sa}^{h+1}$ conditionally on $\mathcal{F}_t$, because $\widehat{Q}_{s'a'}^{h+1}$ depends on downstream uncertainty sets. Note that this claim relies on the rectangular structure of the uncertainty set. Thus,
 
\begin{equation*}
\begin{split}
     \mathbb{E}_t\left(\widehat{p}^h_{sas'} \widehat{Q}_{s'a'}^{h+1} -\mathbb{E}_t  \widehat{p}^h_{sas'} \widehat{Q}_{s'a'}^{h+1} \right)^2 
     &=\mathbb{E}_t\left( (\widehat{p}^h_{sas'} -\mathbb{E}_t \widehat{p}^h_{sas'}) \widehat{Q}_{s'a'}^{h+1} + \mathbb{E}_t  \widehat{p}^h_{sas'} (\widehat{Q}_{s'a'}^{h+1}- \mathbb{E}_t\widehat{Q}_{s'a'}^{h+1}) \right)^2\\
     &=\mathbb{E}_t\left( (\widehat{p}^h_{sas'} -\mathbb{E}_t \widehat{p}^h_{sas'}) \widehat{Q}_{s'a'}^{h+1}\right)^2 + \mathbb{E}_t  \left(\widehat{p}^h_{sas'} (\widehat{Q}_{s'a'}^{h+1}- \mathbb{E}_t\widehat{Q}_{s'a'}^{h+1}) \right)^2.
\end{split}
\end{equation*}
We use the conditional independence property again and Assumption \ref{Rbound2} in order to deduce the following:
\begin{equation*}
\begin{split}
    \mathbb{E}_t\left( (\widehat{p}^h_{sas'} -\mathbb{E}_t \widehat{p}^h_{sas'}) \widehat{Q}_{s'a'}^{h+1}\right)^2 &= \mathbb{E}_t (\widehat{p}^h_{sas'} -\mathbb{E}_t \widehat{p}^h_{sas'})^2 \mathbb{E}_t(\widehat{Q}_{s'a'}^{h+1})^2\leq Q_{\max}^2\mathbf{var}_t\widehat{p}^h_{sas'}, \\
    \text{and }\mathbb{E}_t  \left(\widehat{p}^h_{sas'} (\widehat{Q}_{s'a'}^{h+1}- \mathbb{E}_t\widehat{Q}_{s'a'}^{h+1}) \right)^2 &= \mathbb{E}_t  (\widehat{p}^h_{sas'})^2 \mathbb{E}_t(\widehat{Q}_{s'a'}^{h+1}- \mathbb{E}_t\widehat{Q}_{s'a'}^{h+1}) ^2 = \mathbb{E}_t  (\widehat{p}^h_{sas'})^2 \mathbf{var}_t \widehat{Q}_{s'a'}^{h+1}.
\end{split}
\end{equation*}
Finally, 
\begin{equation*}
\begin{split}
\text{var}_t(\widehat{Q}_{sa}^h) &\leq \gamma^2 \sum_{s', a'}\pi_{s'a'}^h \frac{\mathbb{E}_t \widehat{p}^h_{sas'}}{\left(\mathbb{E}_t \widehat{p}^h_{sas'}\right)^2}(Q_{\max}^2\mathbf{var}_t\widehat{p}^h_{sas'} + \mathbb{E}_t  (\widehat{p}^h_{sas'})^2 \mathbf{var}_t \widehat{Q}_{s'a'}^{h+1}) \\
&\leq \gamma^2 \sum_{s', a'}\pi_{s'a'}^h \frac{\mathbb{E}_t \widehat{p}^h_{sas'}}{\left(\mathbb{E}_t \widehat{p}^h_{sas'}\right)^2} Q_{\max}^2\mathbf{var}_t\widehat{p}^h_{sas'} + \gamma^2 \sum_{s', a'}\pi_{s'a'}^h \mathbb{E}_t \widehat{p}^h_{sas'} \mathbf{var}_t \widehat{Q}_{s'a'}^{h+1}\\
&\leq  Q_{\max}^2 \sum_{s'}\frac{\mathbf{var}_t\widehat{p}^h_{sas'}}{\mathbb{E}_t \widehat{p}^h_{sas'}} + \gamma^2 \sum_{s', a'}\pi_{s'a'}^h \mathbb{E}_t \widehat{p}^h_{sas'} \mathbf{var}_t \widehat{Q}_{s'a'}^{h+1}\\
&\leq \nu_{sa}^h + \gamma^2 \sum_{s', a'}\pi_{s'a'}^h \mathbb{E}_t \widehat{p}^h_{sas'} \mathbf{var}_t \widehat{Q}_{s'a'}^{h+1},
\end{split}
\end{equation*}
where $\nu_{sa}^h$ is given by $\nu_{sa}^h :=  Q_{\max}^2 \sum_{s'}\frac{\mathbf{var}_t\widehat{p}^h_{sas'}}{\mathbb{E}_t \widehat{p}^h_{sas'}}$. 
\end{proof}

\subsection{Proof of Theorem 4.1}

\begin{theorem}[Solution of URBE]
\label{urbe-thm2}
For any worst-case transition $\widehat{p}$ as defined in equation (\ref{p-post2}) and any policy $\pi$, under Assumptions \ref{acyclic2} and \ref{Rbound2}, there exists a unique mapping $w$ that satisfies the uncertainty robust Bellman equation:
\begin{equation}
\label{urbe2}
    w_{sa}^h = \nu_{sa}^h + \gamma^2 \sum_{s'\in\mathcal{S}, a'\in\mathcal{A}} \pi_{s'a'}^h \mathbb{E}_t(\widehat{p}_{sas'}^h)w_{s'a'}^{h+1}, 
\end{equation}
for all $(s,a)\in \mathcal{S}\times\mathcal{A}$ and $h=1, \cdots, H$ where $w^{H+1} = 0$. Furthermore, $w\geq \mathbf{var}_t \widehat{Q}$. 
\end{theorem}

\begin{proof}
Denote by $\mathcal{W}^h$ the robust Bellman operator underlying equation (\ref{urbe2}) and rewrite is as $\mathcal{W}^hw^{h+1} = w^h$. We can easily see that the robust Bellman operator is non-decreasing. 
Also, it has a unique solution, as stated in the following lemma, which is the policy evaluation version of the Min-Max Problem addressed in \citep{bertsekas} (Exercise 1.5).

\begin{lemma}
\label{dp2}
For every $(s,a)\in\mathcal{S}\times\mathcal{A}$, for all step $h=1,\cdots,H$, $w_{sa}^h$ is given by the subsequent steps of the following algorithm which proceeds backwards from $H+1$ to $h$:
 $$
 \left\{
    \begin{array}{ll}
        w_{sa}^{H+1} = 0 \text{ for all } (s,a)\in\mathcal{S}\times\mathcal{A} \\
        w_{sa}^{h} = \nu_{sa}^h +  \gamma^2 \sum_{s'\in\mathcal{S}, a'\in\mathcal{A}} \pi_{s'a'}^h \mathbb{E}_t(\widehat{p}_{sas'}^h)w_{s'a'}^{h+1}
    \end{array}
\right.
$$ 
Therefore, there exists a unique solution to $\mathcal{W}^hw^{h+1} = w^h, h=1,\cdots,H$.
\end{lemma}
 The lower-bound then follows from induction reasoning.  At step $H$, we have $\mathbf{var}_t \widehat{Q}^{H+1} = 0 = w^{H+1}$. Assume that for some $h\leq H$ we have $ w^{h+1} \geq \mathbf{var}_t \widehat{Q}^{h+1}$. Then, by assumption and using Lemma \ref{variance2}, we get:
\begin{equation*}
\begin{split}
    \mathbf{var}_t \widehat{Q}^{h} \leq \mathcal{W}^h\mathbf{var}_t \widehat{Q}^{h+1}
    &\leq \mathcal{W}^h w^{h+1} = w^h.
\end{split}
\end{equation*}
The induction property is hereditary, which concludes the proof of the theorem.
\end{proof}


\section{DQN-URBE Experiments}

\begin{table}[h]
\caption{System's dynamics}
\label{hyperparam}
\begin{center}
\begin{tabular}{lll}
\multicolumn{1}{c}{\bf } & \multicolumn{1}{c}{\bf MARSROVER}  &\multicolumn{1}{c}{\bf CARTPOLE} \\
\hline \\
Nominal model  & $p = 0.005$ & Length = $0.75$, Mass = $1$ \\ \hline
    Size of uncertainty set  & 15 samples & 15 samples\\ \hline

\end{tabular}
\end{center}
\end{table}

\begin{table}[!!h]
\caption{Networks}
\label{hyperparam}
\begin{center}
\begin{tabular}{lll}
\multicolumn{1}{c}{\bf DQN-URBE NETWORKS} & \multicolumn{1}{c}{\bf MARSROVER}  &\multicolumn{1}{c}{\bf CARTPOLE} \\
\hline \\
    Q-network  & ReLu(2 hidden layers of size 10)  & ReLu(3 hidden layers of size 128) \\ \hline
    U(R)BE-network  & ReLu(1 hidden layer of size 15), & ReLu(1 hidden layer of size 100),\\
    & linear activation function for the output& linear activation function for the output \\ \hline
\end{tabular}
\end{center}
\end{table}

\begin{table}[!!h]
\caption{Hyper-parameters}
\label{hyperparam}
\begin{center}
\begin{tabular}{lll}
\multicolumn{1}{c}{\bf DQN-URBE HYPERPARAMETERS} & \multicolumn{1}{c}{\bf MARSROVER}  &\multicolumn{1}{c}{\bf CARTPOLE} \\
\hline \\
    Discount factor $\gamma$ & $0.9$& $0.9$ \\ \hline
    Q-learning rate & 1e-4 & 1e-4 \\ \hline
    U(R)BE network learning rate &1e-4& 1e-4 \\ \hline
    Initial variance coefficient $\mu$ &1e-2& 1e-2\\ \hline
    Posterior parameter $\beta$ &$0.5$ & $0.5$ \\ \hline
    Mini-batch size & $100$ & $256$ \\ \hline
    Final epsilon &1e-3 & 1e-5\\ \hline
    Target update interval & $10$ & $10$ \\ \hline
    Max number of episodes for training  $M_{train}$ & $3000$& $4000$ \\ \hline
    Number of episodes for testing $M_{test}$ & $200$  & $200$  \\ \hline
\end{tabular}
\end{center}
\end{table}

\end{document}